\begin{document}

\makeatletter	   
\renewcommand{\ps@plain}{%
     \renewcommand{\@oddhead}{\textrm{The Generalized Mean Information Coefficient}\hfil\textrm{\thepage}}%
     \renewcommand{\@evenhead}{\@oddhead}%
     \renewcommand{\@oddfoot}{}
     \renewcommand{\@evenfoot}{\@oddfoot}}
\makeatother     
\newcommand{\var}{\mathrm{var}}
\newcommand{\cov}{\mathrm{cov}}
\newcommand{\Id}{\mathrm{Id}}
\newcommand{\E}{\mathbb{E}}
\newcommand{\I}{\mathrm{I}}
\newcommand{\sH}{\mathrm{H}}
\newcommand{\Ind}[1]{\mathbb{I}\{#1\}}
\newcommand{\MIC}{\mathrm{MIC}}
\newcommand{\Approx}{\mathrm{Approx}}
\newcommand{\MCN}{\mathrm{MCN}}
\newcommand{\GMIC}{\mathrm{GMIC}}
\newcommand{\eqd}{\,{\buildrel d \over =}\,}
\newcommand{\plim}{\mathrm{plim}}
\renewcommand \thesection{\Roman{section}.}
\renewcommand \thesubsection{\Alph{subsection}.}
\renewcommand \thesubsubsection{\arabic{subsubsection}.}
\providecommand{\norm}[1]{\lVert#1\rVert}

\title{The Generalized Mean Information Coefficient}
\author{Alexander Luedtke, Linh Tran
\\
Division of Biostatistics, University of California, Berkeley
 \\
 {\tt aluedtke@berkeley.edu} 
 }
 \date{}

\maketitle

\begin{abstract}
\noindent \textbf{Objective:} Reshef \& Reshef recently published a paper in which they present a method called the Maximal Information Coefficient (MIC) that can detect all forms of statistical dependence between pairs of variables as sample size goes to infinity. While this method has been praised by some, it has also been criticized for its lack of power in finite samples. We seek to modify MIC so that it has higher power in detecting associations for limited sample sizes.\\
\textbf{Methods}: Here we present the Generalized Mean Information Coefficient (GMIC), a generalization of MIC which incorporates a tuning parameter that can be used to modify the complexity of the association favored by the measure. We define GMIC and prove it maintains several key asymptotic properties of MIC. Its increased power over MIC is demonstrated using a simulation of eight different functional relationships at sixty different noise levels. The results are compared to the Pearson correlation, distance correlation, and MIC.\\
\textbf{Results:} Simulation results suggest that while generally GMIC has slightly lower power than the distance correlation measure, it achieves higher power than MIC for many forms of underlying association. For some functional relationships, GMIC surpasses all other statistics calculated. Preliminary results suggest choosing a moderate value of the tuning parameter for GMIC will yield a test that is robust across underlying relationships.\\
\textbf{Conclusion:} GMIC is a promising new method that mitigates the power issues suffered by MIC, at the possible expense of equitability. Nonetheless, distance correlation was in our simulations more powerful for many forms of underlying relationships. At a minimum, this work motivates further consideration of maximal information-based nonparametric exploration (MINE) methods as statistical tests of independence.
\end{abstract}

\newtheorem{theorem}{Theorem}
\newtheorem{corollary}{Corollary}
\newtheorem{prop}{Proposition}
\newtheorem{lemma}{Lemma}
\pagestyle{plain}
\section{Introduction.}
Reshef \& Reshef recently published a paper  called ``Detecting Novel Associations in Large Data Sets", in which they present the Maximal Information Coefficient (MIC). This statistic is able to detect many forms of association between pairs of variables, including all functional and a wide range of non-functional relationships\cite{reshef11}. Additionally, MIC is a provably equitable statistic, in the senes that it gives approximately equal scores to different relationships at equal noise levels\cite{reshef11,reshef2013equitability}. Nonetheless, the method has its shortcomings. Shortly after publication, Simon \& Tibshirani published an analysis which showed that, as a test of association, MIC performed worse in terms of power than several other tests of association for many realistic functional forms\cite{simon11}. It is even sometimes less powerful than the Pearson correlation. Simon notes that basic statistical experience suggests testing against such a wide range of alternatives should reduce power in many realistic situations. While MIC's ability to detect the superposition of functions and other complex relationships is a novel and exciting development, most researchers want to detect far simpler relationships in their data. The authors of the original MINE paper responded to the criticism of Simon \& Tibshirani and others in a prepared statement on Andrew Gelman's website by arguing that the main contribution of MIC as a statistic is its equitability, even at the cost of statistical power \cite{reshef12}.

In this paper we focus on statistical power rather than equitability. We present the Generalized Mean Information Coefficient (GMIC), a generalization of MIC which incorporates a tuning parameter that can be used to modify the complexity of the association favored by the measure in a controlled way. Our modification falls into the class of maximal information-based nonparametric exploration (MINE) statistics presented in the original publication. Specifically, we aim to:
\begin{enumerate}[noitemsep]
  \item Introduce GMIC.
  \item Prove that our generalization asymptotically detects the same class of associations as MIC.
  \item Provide intuition as to why our generalization should have higher power in most realistic situations.
  \item Provide simulation evidence supporting our claims.
  \item Compare GMIC to other association statistics using simulation.
\end{enumerate}

\section{Methods.}
\label{sec:methods}
\subsubsection{Maximal Information Coefficient.}

Using mutual information (MI) as a foundation, Reshef \& Reshef propose MIC. Before our discussion of MIC, we review the statistic. Suppose $\{(X_i,Y_i)\}_{i=1}^n$ is a set of i.i.d. random variables drawn from some distribution $P$. We let $X=(X_1,...,X_n)$ and $Y=(Y_1,...,Y_n)$. Let  $I(X,Y|G_{i,j})$ represent the estimated mutual information obtained by placing $(X,Y)$ into the bins defined by $G_{i,j}$, a grid with $i\ge 2$ columns and $j\ge 2$ rows. That is,  $I^*(X,Y)_{i,j} = \max_{G_{i,j}} I(X,Y|G_{i,j})$. See Fig. 1 in the original publication for a graphical interpretation of this procedure. Then for all integers $i,j\ge 2$:
$$C(X,Y)_{i,j} = \frac{I^*(X,Y)_{i,j}}{\log\min\{i,j\}}$$
where $C$ is an infinite matrix and $\log$ represents the base 2 logarithm. We start the row and column indexing at $2$ for convenience. Reshef \& Reshef call $C$ the characteristic matrix. Note that $I(X,Y|G_{i,j})\in [0,\log\min\{i,j\}]$ for all $i,j$, and thus $C_{i,j}\in [0,1]$. Then MIC is given by:
$$\MIC(X,Y) = \max_{ij<B(n)}\{C(X,Y)_{i,j}\}$$
where $B(n)\le O(n^{1-\epsilon})$, $0<\epsilon<1$, is the maximal grid size necessary to control the type I error rate asymptotically. Reshef \& Reshef suggest letting $B(n)=n^{0.6}$. Clearly $0\le \MIC\le 1$. An additional desirable property of MIC is that it only depends on the rank-order of $x$ and $y$. Thus if we compute significance by permutation, we can precompute significance cutoffs for fixed sample sizes and values of $B(n)$.

While exact values of the characteristic matrix can be computed, the space of grids to be searched grows exponentially in $n$. To make running the algorithm feasible on large data sets, Reshef \& Reshef propose a dynamic programming algorithm which provides a close approximation to the characteristic matrix. We refer to the $\MIC$ measure that results from using this matrix $\Approx\MIC$.

In the supplementary materials to the main paper, Reshef \& Reshef prove that $\MIC$ has a number of desirable properties. Because we will use these results to prove that our generalization has similar properties, we restate the results here. For simplicity, here we assume that $B(n)=n^{0.6}$ to avoid restating conditions on $B(n)$. The original publication shows that:
\begin{itemize}
	\item If $X$ and $Y$ are statistically independent, then $\Approx\MIC(X,Y)\rightarrow 0$ in probability as $n\rightarrow\infty$.
	\item If $X$ and $Y$ are not statistically independent, then there exists a constant $\zeta>0$ such that $\MIC(X,Y)\ge\zeta$ almost surely as $n\rightarrow\infty$.
	\item Let $f$ be a nowhere constant function on $[0,1]$ and $X$ be a continuous random variable. Then $\MIC(X,f(X))\rightarrow 1$ as $n\rightarrow\infty$ almost surely.
	\item MIC is equitable in the sense that it scores equally noisy of different types similarly\footnote{For insightful debate of the merits of this final claim, see \cite{reshef2013equitability} and \cite{kinney2013equitability}.}.
\end{itemize}
Note that we are only guaranteed that the approximate algorithm converges to zero in probability for statistically independent random variables and not that $\MIC$ converges to zero in probability.

Thus MIC satisfies many desirable properties for an estimator. In the next section, we briefly discuss one of the MINE statistcs introduced by Reshef \& Reshef to motivate our proposal of a generalization of MIC which prefers simpler association relationships.

\subsubsection{Minimum Cell Number.}
To summarize the complexity of an association, Reshef \& Reshef propose the minimum cell number (MCN):
$$\MCN(X,Y;\delta) = \min_{ij<B(n)}\{\log(ij) : C(X,Y)_{i,j}\ge (1-\delta)MIC(X,Y)\}$$
Where the parameter $0<\delta\le 1$ is chosen based on the desired level of robustness. Larger values of the MCN indicate a complex association, while smaller values indicate a simpler association. As can be seen in Table S1 in the Supplementary Online Materials of the original paper, many realistic associations will tend to have lower MCN scores. In the next section we propose a method which penalizes complex associations in finite samples while maintaining several of the desirable asymptotic properties of MIC.

\subsection{Generalized Mean Information Coefficient.}
\label{sec:gmic}
\subsubsection{GMIC Definition.}
Before introducing the GMIC statistic, we define the maximal characteristic matrix as:
$$C^*(X,Y)_{i,j} = \max_{kl\le ij}\{C(X,Y)_{kl}\}$$
for all $i,j\ge 2$. That is, $C^*(X,Y)_{i,j}$ is the maximal normalized mutual information which can be achieved using grid sizes no larger than $ij$. This processing step is needed for our theoretical results about $\GMIC$ asymptotically detecting all forms of statistical dependence to hold. Note that $C^*$ can be computed efficiently from the characteristic matrix $C$ returned by the current MIC implementation. See Figure \ref{fig:charmats} to see how the maximal characteristic matrix differs from the characteristic matrix in two fabricated examples.

We define $\GMIC_p$ as:
\[
\GMIC_p(X,Y) = \left(\frac{1}{Z}\sum_{ij\le B(n)}\left(C^*(X,Y)_{i,j}\right)^p\right)^{1/p}
\]
where $p$ is a tuning parameter in $[-\infty,\infty]$ and $Z=\#\{(i,j) : ij\le B(n)\}$. We use the convention that $0^{1/p}=0$ for all $p\in[-\infty,\infty]$. We take $\GMIC_{-\infty}$ to denote the minimum and $\GMIC_\infty$ to denote the maximum since these values hold in the limit.

Note that $\GMIC_p$ is a generalized mean. For $p=0$ we take $\GMIC_0$ to denote the geometric mean of $C^*$ because this relation holds in the limit as $p\rightarrow 0$. The generalized mean inequality then guarantees that $\GMIC_p(X,Y)\le \GMIC_q(X,Y)$ for all $-\infty\le p<q\le\infty$ (see 3.2.4 in \cite{abramowitz65}). We denote the value of $\GMIC$ computed on the approximate maximal characteristic resulting from Reshef \& Reshef's algorithm as $\Approx\GMIC$. The fact that $C^*(X,Y)_{i,j}\in[0,1]$ shows that $\GMIC_p\in [0,1]$. In the special case of $p=\infty$, $\GMIC_p(X,Y) = \MIC(X,Y)$.

We also denote $\GMIC_{-\infty}$ as MinIC, standing for the Minimal Information Coefficient. When we invoke the generalized mean inequality in the proofs in the appendix we use the more direct $\GMIC_{-\infty}$ to reference this $\GMIC$ when $p=-\infty$ because it fits naturally into the generalized mean framework. Note that MinIC does not return the minimal value from the characteristic matrix $C$, but rather the minimal value from the maximal characteristic matrix $C^*$, i.e. $C^*_{2,2}$.

Examining the definition of $C^*$ shows that the elements of $C^*$ are nondecreasing as $ij$ increases. The characteristic matrix used by MIC, on the other hand, does not necessarily have this monotonic property (see Figure S7 in the supplementary online materials of\cite{reshef11}).

For $p\in(-\infty,\infty)$, $\GMIC_p$ is similar to the MCN in that it takes into account how quickly (in terms of grid size) the characteristic matrix reaches its maximum. Unlike the MCN, $\GMIC_p$ also takes into account the overall magnitude of the characteristic matrix entries at grid sizes smaller than the grid size $i^* j^*$ which maximizes $I^*(X,Y)_{ij}$. In general, for $p\in(-\infty,\infty)$, $\GMIC_p$ can be viewed as a compromise between the value of the maximal mutual information attainable in a $2\times 2$ grid and the maximal mutual information attainable among all grids of size less than $B(n)$. That is, characteristic matrices with larger values of $C_{2,2}$ or which obtain their maximum at smaller grid sizes will in general achieve higher values of $\GMIC_p$.

\subsubsection{Properties of GMIC.}
The monotonic nature of $C^*$ will allow us to prove several desirable asymptotic properties of GMIC. Specifically, we prove the following results:
\begin{itemize}[noitemsep]
  \item Theorem \ref{thm:gmicindepzero} shows that the approximation algorithm for $\GMIC$ converges in probability to $0$ as $n\rightarrow\infty$ for statistically independent random variables.
  \item Theorem \ref{thm:gmicdepzeta} shows that $\GMIC$ approaches some $\zeta>0$ almost surely for statistically dependent random variables.
  \item Theorem \ref{thm:gmicone} gives a sufficient condition under which that $\GMIC_p$ approaches $1$ for all $p$.
  \item Corollary \ref{gmicmono} shows that $\GMIC_p$ approaches $1$ for all noiseless monotonic relationships. \
\end{itemize}
We first show that the approximation algorithm for $\GMIC$ converges in probability to $0$ as $n\rightarrow\infty$ for statistically independent random variables.
\begin{restatable}{apptheorem}{gmicindepzero}
Suppose $(X,Y)=\{(X_i,Y_i)\}_{i=1}^n$ is a set of continuous i.i.d. $P$ random variables where $P$ has support on $[0,1]^2$ and that $X_1$ is statistically independent of $Y_1$. Then $\Approx\GMIC_p(X,Y)\rightarrow 0$ in probability as $n\rightarrow\infty$ for all $1\le p\le\infty$.
\label{thm:gmicindepzero}
\end{restatable}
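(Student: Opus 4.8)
The plan is a one-line domination argument: bound $\Approx\GMIC_p$ above by $\Approx\MIC$ for every $p\ge 1$, and then invoke the convergence result for $\Approx\MIC$ restated in Section~\ref{sec:methods}. The starting observation is that the (approximate) maximal characteristic matrix is built from nested maxima and therefore never leaves the range of normalized mutual information values already present in the ordinary characteristic matrix. Writing $\tilde C$ for the approximate characteristic matrix returned by Reshef \& Reshef's dynamic program, so that the approximate maximal characteristic matrix is $\tilde C^*_{i,j}=\max_{kl\le ij}\tilde C_{kl}$, one has for every $(i,j)$ with $ij\le B(n)$ that $\tilde C^*_{i,j}\le\max_{kl\le B(n)}\tilde C_{kl}$; and taking $(i,j)$ to be the maximizing pair $(k^*,l^*)$, which satisfies $k^*l^*\le B(n)$, shows this bound is attained, so $\max_{ij\le B(n)}\tilde C^*_{i,j}=\max_{kl\le B(n)}\tilde C_{kl}=\Approx\MIC(X,Y)$. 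This is exactly the identity $\Approx\GMIC_\infty=\Approx\MIC$ recorded when $\GMIC$ was defined. (The cosmetic mismatch between the budget $ij<B(n)$ in the definition of $\MIC$ and $ij\le B(n)$ in that of $\GMIC$ and $Z$ is harmless; if one prefers, replace $B(n)$ by $B(n)+1$, still $O(n^{1-\epsilon})$, so the quoted theorem still applies.)

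Since $Z=\#\{(i,j):ij\le B(n)\}$ is exactly the number of summands, $\Approx\GMIC_p(X,Y)$ is a genuine power mean of the numbers $\{\tilde C^*_{i,j}\}_{ij\le B(n)}\subseteq[0,1]$. The generalized mean inequality (3.2.4 in \cite{abramowitz65}), already recorded above, then gives $0\le\Approx\GMIC_p(X,Y)\le\Approx\GMIC_\infty(X,Y)=\Approx\MIC(X,Y)$ for all $1\le p\le\infty$, a bound that is uniform in $p$. Because $X_1$ and $Y_1$ are statistically independent, and the continuity and support hypotheses make the Reshef \& Reshef independence result applicable, $\Approx\MIC(X,Y)\to 0$ in probability as $n\to\infty$; squeezing $\Approx\GMIC_p(X,Y)$ between $0$ and $\Approx\MIC(X,Y)$ then gives $\Approx\GMIC_p(X,Y)\to 0$ in probability for every $1\le p\le\infty$, as claimed.

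There is no real obstacle here: the entire content is the observation that passing from $C$ to $C^*$ cannot increase the maximum entry, it can only push the smaller entries upward, so the whole $\GMIC$ ``profile'' is dominated by $\MIC$. The two points that deserve a sentence of care are that the domination must be by the \emph{approximate} statistic $\Approx\MIC$, since only that version is asserted to converge to $0$ in probability under independence, and the trivial $<$ versus $\le$ discrepancy in the grid budgets.
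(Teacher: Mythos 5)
Your proof is correct and follows essentially the same route as the paper's: dominate $\Approx\GMIC_p$ by $\Approx\GMIC_\infty=\Approx\MIC$ via the generalized mean inequality and then invoke the convergence in probability of $\Approx\MIC$ to $0$ under independence. Your extra care in verifying that the maximum entry of the (approximate) maximal characteristic matrix coincides with that of the characteristic matrix, and in flagging the $<$ versus $\le$ budget discrepancy, goes slightly beyond what the paper writes out but does not change the argument.
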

We now show that $\GMIC$ is bounded away from zero for statistically dependent random variables.
\begin{restatable}{apptheorem}{gmicdepzeta}
Suppose $(X,Y)=\{(X_i,Y_i)\}_{i=1}^n$ is a set of continuous i.i.d. $P$ random variables where $P$ has support on $[0,1]^2$ and that $X_1$ and $Y_1$ are statistically dependent. Then there exists some constant $\zeta>0$ such that $\GMIC_p(X,Y)\ge\zeta$ almost surely as $n\rightarrow\infty$.
\label{thm:gmicdepzeta}
\end{restatable}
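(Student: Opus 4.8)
The plan is to extract the bound from the single coarsest grid and then propagate it to every $p$ via the generalized mean inequality. The generalized mean inequality recorded in Section~\ref{sec:gmic} gives $\GMIC_p(X,Y)\ge\GMIC_{-\infty}(X,Y)=C^*(X,Y)_{2,2}$ for every $p\in[-\infty,\infty]$, and the $2\times2$ grid is the only grid with $i,j\ge2$ and $ij\le4$, so $C^*(X,Y)_{2,2}=C(X,Y)_{2,2}$. Hence it suffices to produce a constant $\zeta>0$ (depending only on $P$) with $C(X,Y)_{2,2}\ge\zeta$ almost surely as $n\to\infty$; this proves the theorem for all $p$ at once. Note this route is self-contained and does not invoke the known $\MIC$ results, though one could alternatively combine $\MIC(X,Y)=\GMIC_\infty(X,Y)\ge\zeta_0$ with the monotonicity of $C^*$.

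\textbf{Reduction to one fixed grid.} I would use the fact that, since $X_1$ and $Y_1$ are statistically dependent, their joint CDF does not factor into the product of the marginal CDFs; hence there exist $c,d$ with $F_{X_1Y_1}(c,d)\neq F_{X_1}(c)F_{Y_1}(d)$, and any such $c,d$ automatically satisfy $F_{X_1}(c),F_{Y_1}(d)\in(0,1)$ (otherwise both sides would coincide). Let $G^\dagger$ be the $2\times2$ grid whose vertical cut lies at $c$ and whose horizontal cut lies at $d$. Writing the four cell probabilities of $(X_1,Y_1)$ under $G^\dagger$ in terms of $F_{X_1Y_1},F_{X_1},F_{Y_1}$, the binned law equals the product of its marginals exactly when $F_{X_1Y_1}(c,d)=F_{X_1}(c)F_{Y_1}(d)$; so the population mutual information of $(X_1,Y_1)$ binned by $G^\dagger$ is a constant $\iota^*>0$ depending only on $P$.

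\textbf{Consistency and conclusion.} By the strong law of large numbers the empirical frequencies of the four cells of $G^\dagger$ converge almost surely to the corresponding population cell probabilities, and the plug-in mutual information is a continuous function of those frequencies (the $0\log0=0$ convention and continuity of $x\mapsto x\log x$ at $0$ absorb empty or vanishing-marginal cells). Hence $I(X,Y|G^\dagger)\to\iota^*$ almost surely. Since $C(X,Y)_{2,2}=\max_{G_{2,2}}I(X,Y|G_{2,2})\ge I(X,Y|G^\dagger)$, we get $\liminf_{n}C(X,Y)_{2,2}\ge\iota^*$ almost surely, so with $\zeta:=\iota^*/2$ we have $C^*(X,Y)_{2,2}=C(X,Y)_{2,2}\ge\zeta$ for all large $n$ almost surely, and the generalized mean inequality yields $\GMIC_p(X,Y)\ge\zeta$ almost surely as $n\to\infty$ for every $p\in[-\infty,\infty]$.

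\textbf{Anticipated difficulty.} The only step demanding real care is the consistency of the plug-in mutual information on the fixed grid $G^\dagger$: one must check that the almost-sure convergence of cell frequencies survives the passage through $\log$, including configurations where a cell or a marginal cell carries zero mass, which is handled by continuity of the entropy-type terms at the boundary of the simplex. The remaining ingredients — the CDF characterization of independence, the observation that a single $2\times2$ grid witnesses any dependence, and the generalized mean inequality already stated in the paper — are routine.
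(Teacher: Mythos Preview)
Your proof is correct and follows essentially the same route as the paper: both reduce to showing $C(X,Y)_{2,2}\ge\zeta$ almost surely, identify $C^*_{2,2}=C_{2,2}=\GMIC_{-\infty}$, and then invoke the generalized mean inequality to cover all $p$. The only difference is that the paper imports the lower bound on $C(X,Y)_{2,2}$ directly from Proposition~6.8 of the Reshef supplementary materials, whereas you supply a self-contained argument for it via a fixed dependence-witnessing $2\times2$ grid and the strong law of large numbers.
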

We now develop conditions under which $\GMIC_p$ approaches $1$ for all choices of the tuning parameter $p$. Unsurprisingly, these conditions are stronger than the analogous conditions for $\MIC$.
\begin{restatable}{apptheorem}{gmicone}
Suppose $(X,Y)=\{(X_i,Y_i)\}_{i=1}^n$ is a set of continuous i.i.d. $P$ random variables where $P$ has support on $[0,1]^2$. Let $M_X= \{m : Pr(X_1>m)=1/2\}$ and $M_Y= \{m : Pr(Y_1>m)=1/2\}$. Then $\GMIC_p(X,Y)\rightarrow 1$ in probability for all $p$ if one of the following two conditions holds for some $m_1\in M_X$ and $m_2\in M_Y$:
\begin{enumerate}
 \item $Pr(Y_1< m_2|X_1<m_1)=1$ and $Pr(Y_1> m_2|X_1>m_1)=1$
 \item $Pr(Y_1> m_2|X_1<m_1)=1$ and $Pr(Y_1< m_2|X_1>m_1)=1$
\end{enumerate}
\label{thm:gmicone}
\end{restatable}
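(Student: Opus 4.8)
The plan is to show that the single entry $C(X,Y)_{2,2}$ already tends to $1$ and to let the generalized mean inequality propagate this to every $p$. Since the entries of the maximal characteristic matrix are nondecreasing in $ij$, $\GMIC_{-\infty}(X,Y)=C^*(X,Y)_{2,2}=C(X,Y)_{2,2}$, and the generalized mean inequality together with $\GMIC_p\in[0,1]$ gives $C(X,Y)_{2,2}=\GMIC_{-\infty}(X,Y)\le\GMIC_p(X,Y)\le 1$ for all $p\in[-\infty,\infty]$. Because $\log\min\{2,2\}=1$ we have $C(X,Y)_{2,2}=I^*(X,Y)_{2,2}$, so it suffices to prove $I^*(X,Y)_{2,2}\to 1$ in probability; the squeeze then delivers $\GMIC_p(X,Y)\to 1$ for every $p$ at once.

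Next I would assume condition 1 holds for some $m_1\in M_X$ and $m_2\in M_Y$ (condition 2 is handled identically, with the roles of two grid cells interchanged). Consider the fixed $2\times 2$ grid $G$ with vertical boundary $x=m_1$ and horizontal boundary $y=m_2$. Since $X_1$ and $Y_1$ are continuous, $Pr(X_1=m_1)=Pr(Y_1=m_2)=0$, so condition 1 forces $Pr(X_1<m_1,\,Y_1>m_2)=0$ and $Pr(X_1>m_1,\,Y_1<m_2)=0$; by a union bound over the $n$ observations, almost surely no observation lies in either of the two corresponding cells of $G$, and all $n$ points fall in the remaining two (diagonal) cells.

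On this almost sure event, writing $N=\#\{i:X_i<m_1\}$, the two nonempty cells of $G$ contain exactly $N$ and $n-N$ observations and both marginal grid distributions equal $(N/n,\,(n-N)/n)$; hence the estimated mutual information on $G$ is precisely $I(X,Y\mid G)=h(N/n)$, where $h(q)=-q\log q-(1-q)\log(1-q)$ is the base-$2$ binary entropy. As $G$ is among the grids in the maximization defining $I^*$, we obtain $I^*(X,Y)_{2,2}\ge h(N/n)$. Since $m_1\in M_X$ and $X_1$ is continuous give $Pr(X_1<m_1)=1/2$, $N\sim\mathrm{Binomial}(n,1/2)$ and $N/n\to 1/2$ almost surely by the strong law; continuity of $h$ then yields $h(N/n)\to h(1/2)=1$ almost surely. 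Together with $I^*(X,Y)_{2,2}\le\log\min\{2,2\}=1$ this gives $I^*(X,Y)_{2,2}\to 1$ almost surely, hence in probability, completing the argument.

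I do not anticipate a genuine obstacle here: the content of the proof is the observation that the grid cut at the medians separates the data perfectly into two diagonal cells, combined with the squeeze $C(X,Y)_{2,2}\le\GMIC_p(X,Y)\le 1$. The only points needing care are the continuity bookkeeping that rules out observations in the off-diagonal cells and the exact identification of Reshef \& Reshef's plug-in mutual information on $G$ with $h(N/n)$; everything else reduces to the strong law and to the generalized mean inequality already recorded above. In contrast to the analysis of $\MIC$, no larger grids are required, since the lower bound $\GMIC_{-\infty}=C(X,Y)_{2,2}$ already does all the work.
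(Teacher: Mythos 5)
Your proof is correct and follows the same architecture as the paper's: under condition 1, show that the $2\times 2$ grid cut at the medians $m_1,m_2$ drives $C^*(X,Y)_{2,2}=\GMIC_{-\infty}(X,Y)$ to $1$, then squeeze $\GMIC_{-\infty}\le\GMIC_p\le 1$ via the generalized mean inequality. Where you differ is in how the convergence of the estimated mutual information on that grid is established. The paper works at the population level: it sets $V=\Ind{X_1<m_1}$, $W=\Ind{Y_1<m_2}$, computes $\I(V,W)=\sH(V)-\sH(V|W)=\log 2$, and then cites Lemmas 6.2 and 6.19 of the Reshef \& Reshef supplement to transfer this to the empirical quantity up to an $o(1)$ error. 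You instead compute the empirical mutual information on the median grid in closed form: continuity kills the off-diagonal cells almost surely, the plug-in MI of a two-cell diagonal table is exactly $h(N/n)$ with $N\sim\mathrm{Binomial}(n,1/2)$, and the strong law plus continuity of $h$ at $1/2$ gives $h(N/n)\to 1$ almost surely. Your route is more elementary and self-contained (no external lemmas needed, and it actually yields almost sure convergence of the lower bound, stronger than the stated convergence in probability), at the cost of being specific to the degenerate structure imposed by condition 1; the paper's route is shorter on the page because it leans on general-purpose convergence machinery already proved elsewhere. Both are valid, and your closed-form identification $I(X,Y\mid G)=h(N/n)$ is right: the joint and both marginal entropies of the two-cell diagonal table all equal $h(N/n)$, so the MI reduces to $h(N/n)$.
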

Figure \ref{fig:optimalgrid} gives examples of relationships that satisfy the conditions of Theorem \ref{thm:gmicone}. The next corollary shows that asymptotically $\GMIC_p$ approaches $1$ for monotonically increasing or decreasing functional relationships.
\begin{corollary}
\label{gmicmono}
Suppose $(X,Y)=\{(X_i,Y_i)\}_{i=1}^n$ is a set of continuous i.i.d. $P$ random variables where $P$ is continuous with support on $[0,1]^2$. Further suppose that $Y_1=f(X_1)$ almost surely for some noiseless monotonic, nowhere constant function $f$. Then $\GMIC_p(X,Y)\rightarrow 1$ for all $p\in[-\infty,\infty]$.
\end{corollary}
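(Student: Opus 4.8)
The plan is to deduce Corollary \ref{gmicmono} directly from Theorem \ref{thm:gmicone}. Since that theorem already delivers $\GMIC_p(X,Y)\to 1$ for \emph{all} $p$ once either of its two conditions holds, the entire task reduces to checking that a noiseless, monotonic, nowhere constant relationship $Y_1=f(X_1)$ satisfies one of those two conditions for a suitable choice of medians $m_1\in M_X$ and $m_2\in M_Y$. (Alternatively, one could establish the limit only for $p=-\infty$ and then recover every other $p$ from the generalized mean inequality $\GMIC_{-\infty}(X,Y)\le \GMIC_p(X,Y)\le 1$; but this shortcut is not needed here.)

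The first step is to upgrade ``monotonic and nowhere constant'' to ``strictly monotonic.'' If $f$ were, say, weakly increasing with $f(a)=f(b)$ for some $a<b$, then $f(a)\le f(x)\le f(b)=f(a)$ for every $x\in[a,b]$, so $f$ would be constant on $[a,b]$, contradicting the hypothesis. Hence $a<b$ implies $f(a)\ne f(b)$, so $f$ is either strictly increasing or strictly decreasing on $[0,1]$. I would treat the strictly increasing case in detail; the strictly decreasing case is entirely symmetric and ends up matching condition (2) of Theorem \ref{thm:gmicone} rather than condition (1). (Strict monotonicity together with continuity of $X_1$ also shows $Y_1$ has no atoms, so the hypotheses of Theorem \ref{thm:gmicone} are met.)

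The second step fixes the medians. Because $X_1$ is continuous, its CDF is continuous, so the median set $M_X=\{m : Pr(X_1>m)=1/2\}$ is nonempty; pick any $m_1\in M_X$ and put $m_2=f(m_1)$. Strict monotonicity gives the set identity $\{Y_1>m_2\}=\{f(X_1)>f(m_1)\}=\{X_1>m_1\}$, whence $Pr(Y_1>m_2)=Pr(X_1>m_1)=1/2$ and so $m_2\in M_Y$. With these choices, conditioning on $\{X_1<m_1\}$ forces $Y_1=f(X_1)<f(m_1)=m_2$, so $Pr(Y_1<m_2\mid X_1<m_1)=1$; symmetrically $Pr(Y_1>m_2\mid X_1>m_1)=1$. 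This is exactly condition (1) of Theorem \ref{thm:gmicone}, so that theorem yields $\GMIC_p(X,Y)\to 1$ for all $p$, as claimed. In the decreasing case the same computation, with the two tail events interchanged, produces condition (2).

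I do not expect a genuine obstacle here: the proof is essentially a matter of matching hypotheses. The only points that need a little care are the reduction from ``nowhere constant'' to strict monotonicity and the verification that the median of $Y_1$ may be taken to be precisely $f(m_1)$ — both of which rely on continuity of $X_1$ together with the strictness just established.
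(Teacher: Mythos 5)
Your proposal is correct and takes the same route as the paper, whose proof is only the one-line sketch ``Note that $f$ satisfies the conditions in Theorem \ref{thm:gmicone}''; you simply supply the details (strict monotonicity from ``nowhere constant,'' the choice $m_2=f(m_1)$, and the verification of condition (1) or (2)) that the paper leaves implicit.
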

\begin{proof}[Sketch of proof]
Note that $f$ satisfies the conditions in Theorem \ref{thm:gmicone}.
\end{proof}

\subsubsection{Hypothesis Testing with GMIC.}
We now consider hypothesis testing with $\GMIC$. Suppose we wish to test the null distribution that $X$ and $Y$ are statistically independent against the alternative that $X$ and $Y$ are statistically dependent. Then a natural test is to reject the null hypothesis for large values of $\GMIC(X,Y)$. Note that for the same reasons discussed in the MIC paper, GMIC is a rank-order statistic. For continuous random variables, rank ties occur with probability zero. Thus, as in the original MIC paper, we can compute significance by computing $\GMIC_p$ for fixed $p$ on draws from the permutation distribution. Trivially, note that we cannot use the null distribution of $\GMIC_p$ to compute the significance of $\GMIC_q$ for some $q\not=p$. Thus we must compute separate empirical null distributions (though perhaps on the same Monte Carlo draws) for each $\GMIC_p$ of interest.

For a data set in which many variable pairs are of interest, Reshef \& Reshef suggest controlling the false discovery rate \cite{benjamini95}. We do not consider the multiple testing problem here.

\subsubsection{Implementation.}
We implemented $\GMIC$ in C by adding the definition to the cmine library \cite{albanese12}. We then modified the \textbf{minerva} package in R to call the modified C library, rather than the original \cite{minerva12}.

\subsection{Squared Pearson Correlation Coefficient.}
We also compare GMIC with two existing association measures that do not belong to the MINE family of statistics. The first of these measures is given by the square of the Pearson correlation coefficient. Although a squared correlation of zero does not necessarily imply that $X$ and $Y$ are independent, we use $r^2$ to test the hypothesis that $X$ and $Y$ are statistically independent against the alternative that they are not because $r^2$ is commonly used to test this hypothesis in practice. Specifically, we reject for large values of $r^2$. Nonetheless, we recognize that it is in fact inappropriate to use Pearson correlation coefficient to test this hypothesis (e.g. for the quadratic relationship in our simulation, see Section \ref{sec:data}\ref{sec:sim}).

\subsection{Distance Correlation.}
The second existing association measure considered is distance correlation as proposed by Sz{\'e}kely et al. in \cite{szekely07}. Specifically, they consider the parameter:
$$\mathrm{dCor}(X_1,Y_1) = \frac{\mathrm{dCov}(X_1,Y_1)}{\sqrt{\mathrm{dVar}(X_1)\mathrm{dVar}(Y_1)}}$$
Where $\mathrm{dCov}$ and $\mathrm{dVar}$ are defined in Definition 2 of Sz{\'e}kely et al. 2007. While we limit the discussion of distance correlation here, we note that it has the desirable property that $\mathrm{dCor}(X_1,Y_1)=0$ if and only if $X_1$ and $Y_1$ are independent. Conditions under which dCor is $1$ are given in Theorem 3 of the Sz{\'e}kely et al. paper. To estimate $\mathrm{dCor}$, we use the empirical distance correlation given in Definition 5 of the Sz{\'e}kely et al. paper.

We use the empirical distance correlation to test the null hypothesis that $X$ and $Y$ are statistically independent against the alternative that they are not. Specifically we reject for large values of the empirical distance correlation.

For our simulations we use the R implementation of distance correlation in the \textbf{energy} package \cite{rizzo11}.

\section{Data.}
\label{sec:data}
\subsection{Simulation.}
\label{sec:sim}
A simulation of eight different functional relationship was implemented to compare the power of each statistic considered. Specifically, we considered:
\begin{equation} 
	\label{eq:funcs}
	\begin{split}
	&Y_i = X_i + \epsilon_i \\
	&Y_i = 4(X_i-.5)^2 + \epsilon_i \\
	&Y_i = 80(X_i-1/3)^3 - 12(X_i-1/3) + 10\epsilon_i \\
	&Y_i = \sin(4X_i \pi) + 2\epsilon_i \\
	&Y_i = \sin(16X_i \pi) + \epsilon_i \\
	&Y_i = X_i^{1/4} + \epsilon_i \\
	&Y_i = (2W_i-1)\sqrt{1-(2X_i-1)^2} + \epsilon_i/4, \text{where } W_i \sim Bern(0.5) \\
	&Y _i= I(X_i>0.5) + 5\epsilon_i \\
	&\text{where } X_i \sim \mathrm{Uniform}(0,1) \text{ and } \epsilon_i \sim N(0,\mathrm{noise}^2)
	\end{split}
\end{equation}
for $i=1,...,n$. These functions were originally considered by Simon \& Tibshirani to assess the statistical power of MIC \cite{simon11}. Plots of all $8$ relationships are shown in Figure \ref{fig:sample}. One thousand runs were
used to estimate each null and alternative distribution at $2000$ observations and $60$ different noise levels. Like Simon and Tibshirani, we determined proper cutoffs for a level $0.05$ test using the $95^{th}$ percentile of $\GMIC_p$ from the marginal empirical distributions for each $p$. That is, we sampled $\tilde{X}$ i.i.d. from Uniform$(0,1)$, generated $Y$ according to the functions in Equation \ref{eq:funcs} using $\tilde{X}$, and  sampled a separate $X$ i.i.d. Uniform$(0,1)$. We then computed $\GMIC$ on both $(\tilde{X},Y)$ and $(X,Y)$. Power was calculated as the proportion of runs in the alternative distribution $(\tilde{X},Y)$ higher than the calculated cutoff from the null distribution $(X,Y)$. To analyze the performance of the GMIC under different values of the tuning parameter, the simulation was completed under approximately $400$ different values of $p$:
$$p\in\{-100,-99,...,-1\}\cup\{1,2,...,100\}\cup\{-0.99,-0.98,-0.01\}\cup\{0.01,0.02,...,0.99\}\cup\{-\infty,\infty\}$$
In the figures $\GMIC_p$ for $p=-\infty$ is referred to as MinIC and $p=\infty$ is referred to by its original name, MIC. Our consideration of smaller increments as we approached $0$ was due to our initial impression that the maximal power would be obtained at this value.

The limiting value approached by GMIC as $n\rightarrow\infty$ under different values of the tuning parameter may vary for many noiseless
relationships. Futhermore, the limiting value may depend on the form of
the relationship. To analyze non-statistical interpretability in practice, we computed the sample mean of GMIC under three different tuning parameter values, namely $-\infty$, $-1$,
$\infty$, and compared to these values to the sample means of distance correlation and the squared Pearson's correlation coefficient.

\section{Results.}
Figure \ref{fig:power} shows the power across noise levels for the various relationships with a sample size of 2000. Note that $\GMIC_{-1}$ has higher power than MIC for all relationships except the high-frequency sine wave. MinIC has lower power for the high-frequency sine and circle relationships. Notably, the distance correlation statistic
still outperforms both $\GMIC_{-1}$ and MinIC for all but one relationship (high-frequency sine). It is also worth noting that while $\GMIC_{-1}$ beats MinIC in terms
of power for the sine period $1/8$ and circle relationships, the opposite holds true for the linear, $x^{1/4}$, and step function relationships. For the quadratic, cubic, and sine period $1/2$ relationships, MinIC and $\GMIC_{-1}$ have only slightly lower power than distance correlation across the various noise levels.

For comparison with the Simon \& Tibshirani commentary, we also ran the simulation at a sample size of $320$. The results, which appear in Figure \ref{fig:power_pre}, are similar to the results at $n=2000$.

Figure \ref{fig:GMIC_power} shows the power for GMIC at each of the approximately $400$
tuning parameter values, plotted at seven different noise levels. Interestingly, a
sharp increase in the power occurs around $-1$ for every relationship considered
except for sine period $1/8$. This is consistent with the
results seen in Figure \ref{fig:power}. The power tends to stays relatively
constant at non-negative values. Note that \textit{a priori} we believed that $\GMIC_\epsilon$ for some small $\epsilon>0$ would yield the most robust statistic. The choice to include $\GMIC_{-1}$ instead of $\GMIC_\epsilon$ was made based on the results of Figure \ref{fig:GMIC_power}. We acknowledge that such a procedure may take away from the generalizability of the results to other relationships, and that further work is needed to find the best choice of the tuning parameter $p$. Nonetheless, $\GMIC_{0.1}$ attained power close to that of $\GMIC_{-1}$ in many situations as evidenced in Figure \ref{fig:GMIC_power}. Further work is needed to determine how to choose $p$ in a more general setting.

Figure \ref{fig:expectation} shows the sample mean of each statistic for the eight relationships of interest under
the first $10$ noise levels as well as without noise. As expected, values for
all statistics decrease as noise level increases. Whereas MIC should approach $1$ as $n$ grows at a noise level of zero for all relationships considered, we have no such guarantee for $\GMIC_{-1}$. Nonetheless, the sample mean of $\GMIC_{-1}$ is close to the sample mean of MinIC across all relationships except the high-frequency sine and the circle. MinIC is considerably smaller than MIC and $\GMIC_{-1}$ across all relationships except the step function and $X^{1/4}$ relationships. Under the linear association, all values approach $1$ as the noise level approaches $0$.

Similarly, the distance correlation coefficient is noticeably lower in all
the noiseless non-monotonic associations tested. Its sample mean is lower
than (or equal to) the expected value of $\GMIC$ for every functional relationship considered.

\section{Discussion.}

The objective of this study was to develop and test a modification to MIC to 
increase its low statistical power. We have defined an alternative with GMIC and shown analytically that $\GMIC$
maintains many of the same nice properties that MIC possesses.
We have also shown by simulation that there are many realistic settings in which our method performs better than MIC in terms of power for various values of the tuning parameter.

As is always the case, the introduction of a tuning parameter $p$ naturally leads to the question of which choice of $p$ is optimal in terms of some criterion, e.g. statistical power. Our initial belief was that the peak would occur at a value near $p=0$,
thereby approximately returning the geometric mean. The geometric mean was appealing because it (informally) represents the midpoint of $[-\infty,\infty]$, and thus seemed to offer a natural compromise between the extremes of MIC and MinIC. In Figure
\ref{fig:GMIC_power}, we see that except for the high frequency sine wave, our
power appears to be robust at approximately $p=-1$. Within the generalized mean framework, $p=-1$ yields a harmonic mean of the maximal characteristic matrix. A more formal study should be conducted in order to verify that $-1$ performs well at other sample sizes and for other realistic relationships. As stated earlier, for most of the relationships considered the power drops dramatically at non-negative values.

In simulation, $\GMIC_{-1}$ largely outperforms MIC for most of the relationships tested. Nonetheless, distance correlation was clearly superior to all other methods considered in terms of power. Whereas $\GMIC_{-1}$ has somewhat similar performance in the quadratic, cubic, and sine (period $1/2$) associations, the distance correlation still outperforms both GMIC and MinIC for all relationships except for the high frequency sine wave.

Practitioners may find equitability to be a valuable property. According to Reshef \& Reshef, an equitable statistic is one that ``should give similar scores to equally noisy relationships of different types" \cite{reshef11}. Equitability allows the user to easily interpret the strength of the bivariate association. Values close to 1 have very low variance in the assocation and values close to 0 have no association. Reshef \& Reshef showed that MIC satisfies a particular equitability criterion. As of now, we do not have any equitability results for GMIC, nor do we expect that $\GMIC$ satisfies an equitability criterion analogous to the one satisfied by MIC.

Without equitability, interpretability may become an issue when testing across large numbers of pairwise associations. A significance level is less useful if one cannot differentiate between the complexity of the relationship and the underlying noise level implied by a coefficient estimate. $\GMIC$ allows for a natural trade-off between equitability and power, as larger values of the tuning parameter will tend towards MIC. If the desire is to gain power, the practitioner may choose a value for $p$ closer to $-1$. 

While $\GMIC$ generally has more power than the MIC, there are still
further advances needed. We now have a statistic that allows the user to trade equitability
for more statistical power. We leave it to the reader to decide whether the trade is
profitable. At a minimum, this work motiviates further consideration of MINE methods as statistical tests of independence.

\subsection*{Acknowledgements.}
AL and LT would like to thank Sandrine Dudoit for valuable discussions. AL gratefully acknowledges the support of the Department of Defense (DoD) through the National Defense Science \& Engineering Graduate Fellowship (NDSEG) Program.

\bibliographystyle{unsrt}
\bibliography{GMICnotes_bib}{}

\section*{Appendix.}
\gmicindepzero*
\begin{proof}
For $p=\infty$, we note that $\Approx\GMIC_\infty\equiv\Approx\MIC$, and thus extend the results from Theorem 1 of the original paper. Suppose $p\in[-\infty,\infty)$. The generalized mean inequality shows that $\Approx\GMIC_p(X,Y)\le \Approx\GMIC_\infty(X,Y)$. Thus for all $\epsilon>0$, $\{(x,y) : \Approx\GMIC_p(x,y)\ge\epsilon\}\subseteq\{(x,y) : \Approx\GMIC_\infty(x,y)\ge\epsilon\}$. Thus for all $\epsilon>0$:
$$Pr(\Approx\GMIC_p(X,Y)\ge\epsilon)\le Pr(\Approx\MIC_\infty(X,Y)\ge\epsilon)\rightarrow 0\textrm{ as }n\rightarrow\infty$$
because $\Approx\MIC_\infty$ is equivalent to $\Approx\GMIC_\infty$. Thus $\Approx\GMIC_p(X,Y)$ converges in probability to $0$ as $n\rightarrow\infty$.
\end{proof}

\gmicdepzeta*
\begin{proof}
We follow the proof of Proposition 6.8 of the supplementary online materials of \cite{reshef11}. Specifically, we use the fact that $C(X,Y)_{2,2}\ge \zeta$ almost surely for some $\zeta>0$. Note that $C^*_{2,2}\equiv C_{2,2}$, and thus $C^*_{2,2}\ge\zeta$ almost surely. By construction of the maximal characteristic matrix $C^*$, $\GMIC_{-\infty}(X,Y)\equiv C^*_{2,2}$. By the generalized mean inequality, $\GMIC_{-\infty}$ provides a lower bound for $\GMIC_p$ for all $p>-\infty$. The result follows immediately.
\end{proof}

\gmicone*
\begin{proof}
We first show that $\GMIC_p(X,Y)\rightarrow 1$ for all $p\in[-\infty,\infty]$ if condition 1 above holds. Suppose there exist some $m_1\in M_X$ and $m_2\in M_Y$ such that $Pr(Y_1< m_2|X_1<m_1)=1$ and $Pr(Y_1> m_2|X_1>m_1)=1$. Let $V= \Ind{X_1<m_1}$ and $W= \Ind{Y_1<m_2}$. We first show that $\I(V,W)=\log 2$, where $\I(V,W)$ represents the true mutual information on the inderlying distribution $P$. Note that $\I(V,W)=\sH(V)-\sH(V|W)$, where $\sH(V)$ represents the Shannon entropy of $V$ and $\sH(V|W)$ represents the conditional entropy of $V$ given $W$. But condition 1 implies that $\sH(V|W)=0$ because $V$ is known with probabilty $1$ given $W$. Condition 1 also implies that the marginal distribution of $V$ is a Bernoulli distribution with probability of success $1/2$, and thus $\sH(V)=\log 2$. It follows that $\I(V,W)=\log 2$.

Applying Lemma 6.2 and Lemma 6.19 from the Supplementary Online Materials of the original paper shows that $\I(X_1,Y_1|G)$ differs from $\I(V,W)$ by an additive factor of order $o(1)$, where $G$ represents the grid defined by $m_1$ and $m_2$. Thus $\I(X_1,Y_1|G)\rightarrow \log 2$, and it follows that $\GMIC_{-\infty}(X,Y)\rightarrow 1$ in probability. By the generalized mean inequality, $\GMIC_p(X,Y)\rightarrow 1$ in probability for all $p\in[-\infty,\infty]$.

A nearly identical argument shows that $\GMIC_p(X,Y)\rightarrow 1$ for all $p\in[-\infty,\infty]$ if condition 2 holds.
\end{proof}

\newpage
\section{Figures.}
\begin{figure}[ht!]
\begin{center}
\includegraphics[width=5.25in]{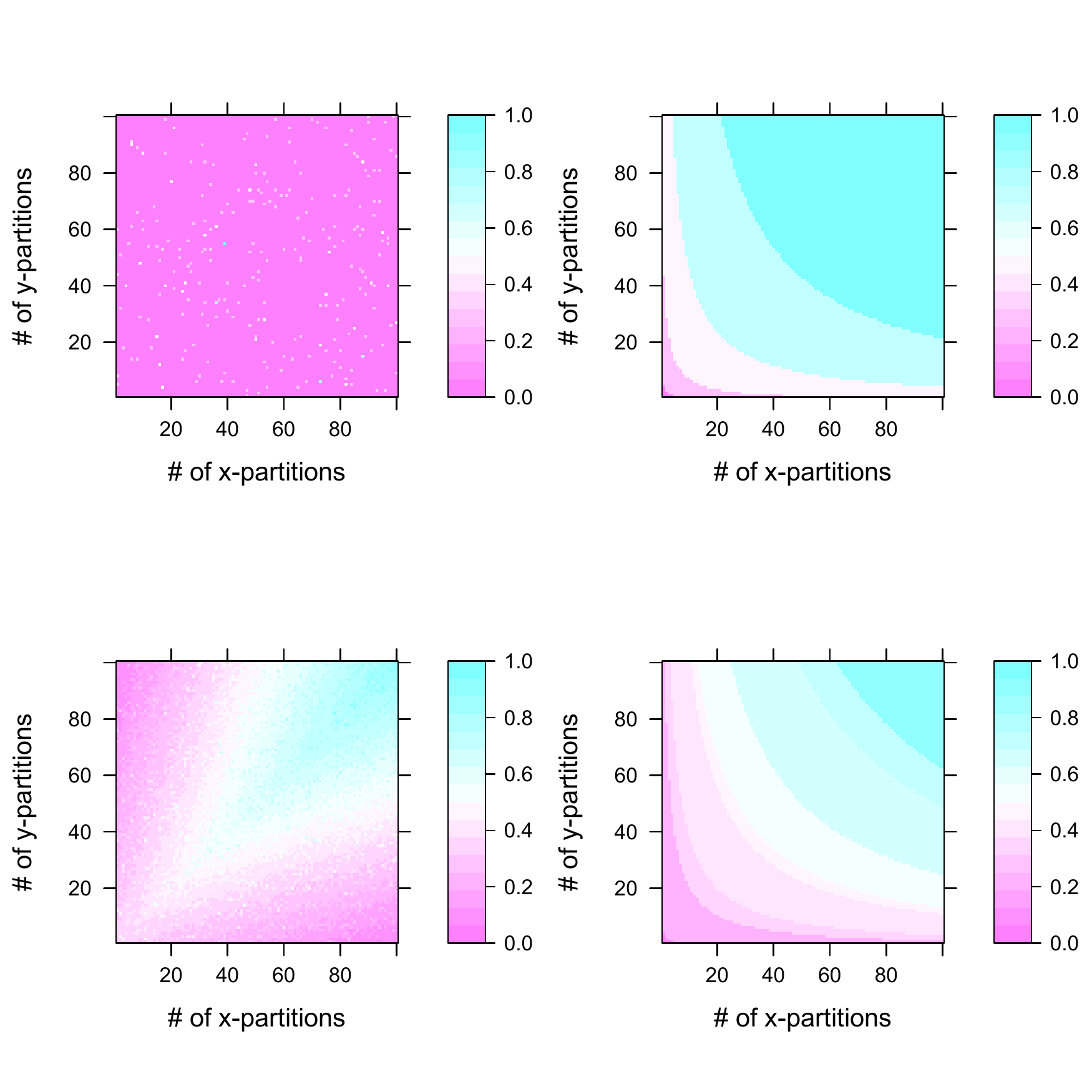}
\end{center}
\caption{On the left are $100\times 100$ subsets of two characteristic matrices. Note that these characteristic matrices do not correspond to actual functional values, rather are just used for display purposes. On the right are the $100\times 100$ subsets of the maximal characteristic matrices corresponding to the adjacent characteristic matrices. Note that the top left characteristic matrix does not appear to have any discernible pattern and only has a few large $I^*$ values, but when processed clearly follows a pattern and is strictly increasing in grid size. The bottom left characteristic appears to have an underlying pattern, but its values are not strictly increasing in grid size. Processing this matrix into the maximal characteristic matrix visually yields a much different result.}
\label{fig:charmats}
\end{figure}

\begin{figure}
\begin{center}
\includegraphics{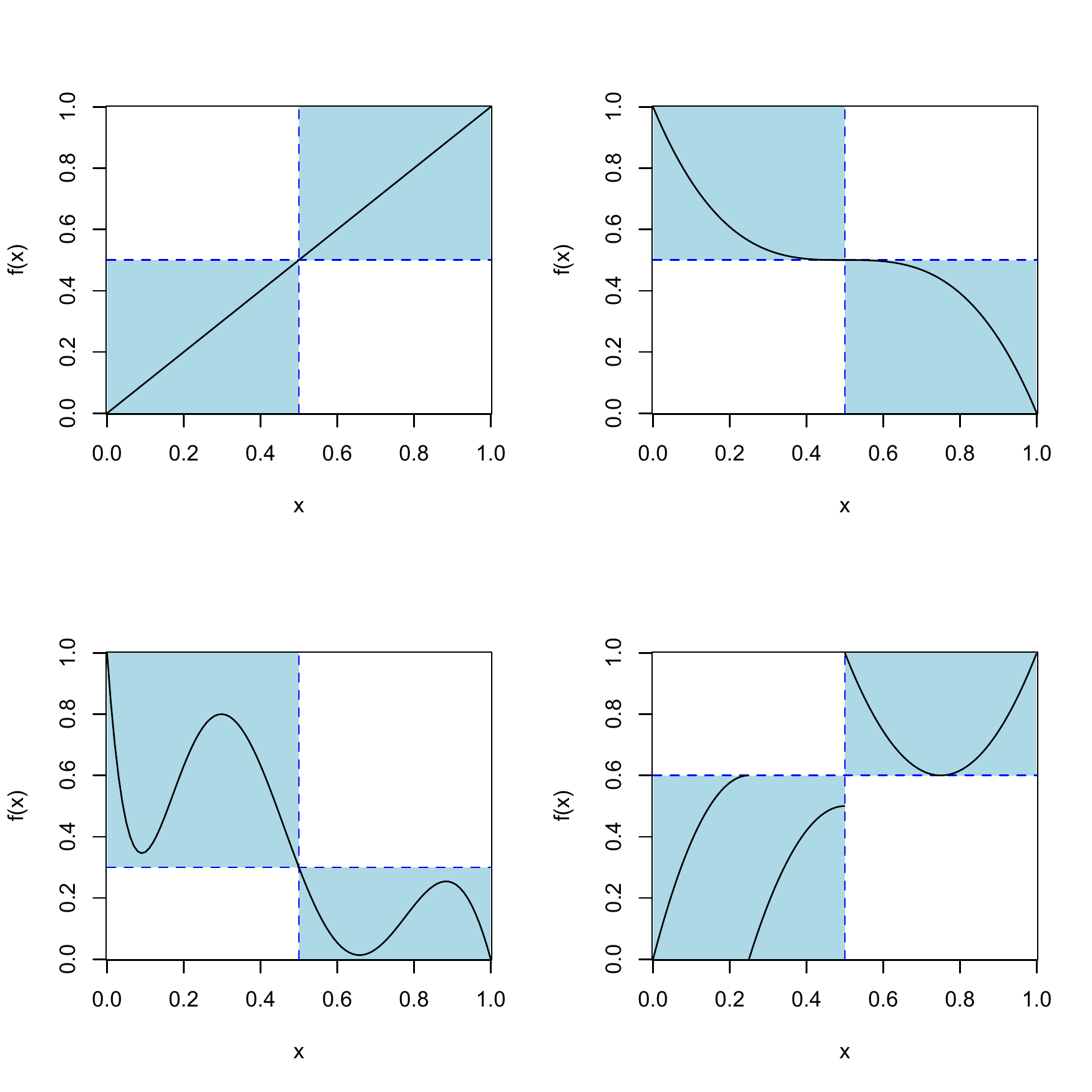}
\end{center}
\caption{Plots of different relationships which satisfy the conditions of Theorem \ref{thm:gmicone} when the median of $X$ falls at $1/2$ (e.g. $X_1\sim\mathrm{Beta}(\beta,\beta)$, $\beta>0$). Note that these functional relationships are both continuous and discontinuous, monotonic and nonmonotonic. The optimal grid of size $4$ is shaded in blue. Given whether or not $Y$ is above or below the dashed horizontal line at $m$, we can say with absolute certainty whether $X$ is to the left or right of the median at $1/2$ in the grids above.}
\label{fig:optimalgrid}
\end{figure}

\begin{figure}
	\begin{center}
		\includegraphics[width=6.5in]{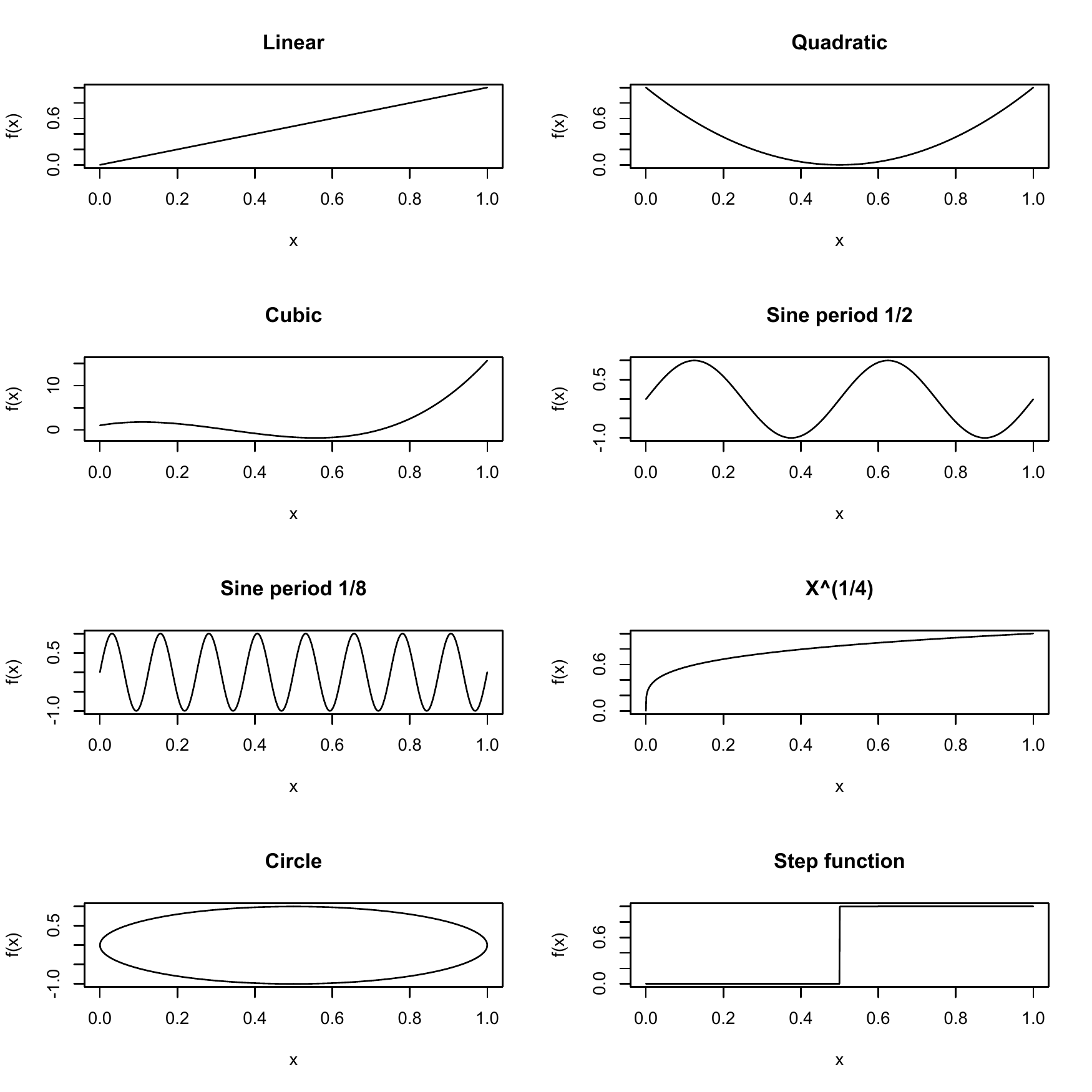}
	\end{center}
	\caption{Sample plots for the eight relationships tested. Note that the relationships are both functional and non-functional, monotonic and non-monotonic.}
	\label{fig:sample}
\end{figure}

\begin{figure}
	\begin{center}
		\includegraphics[width=6.5in]{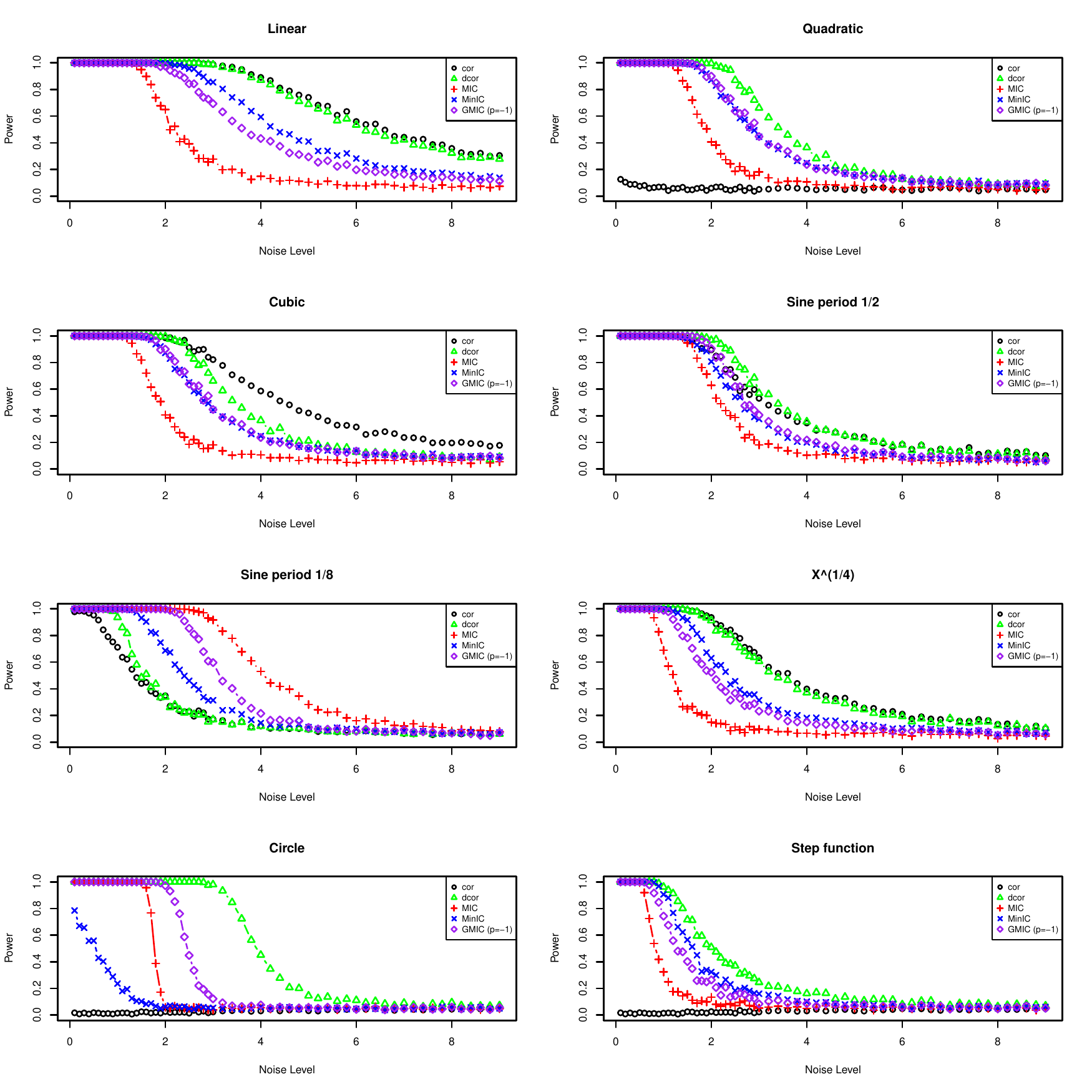}
	\end{center}
	\caption{Simulation results for the eight relationships tested (n=2000). Note that $\GMIC_{-1}$ and MinIC outperform MIC in all cases except for the high-frequency sine wave. Also note that distance correlation outperforms all of the $\GMIC$ family tests except in the case of the high-frequency sine wave.}
	\label{fig:power}
\end{figure}

\begin{figure}
	\begin{center}
		\includegraphics[width=6.5in]{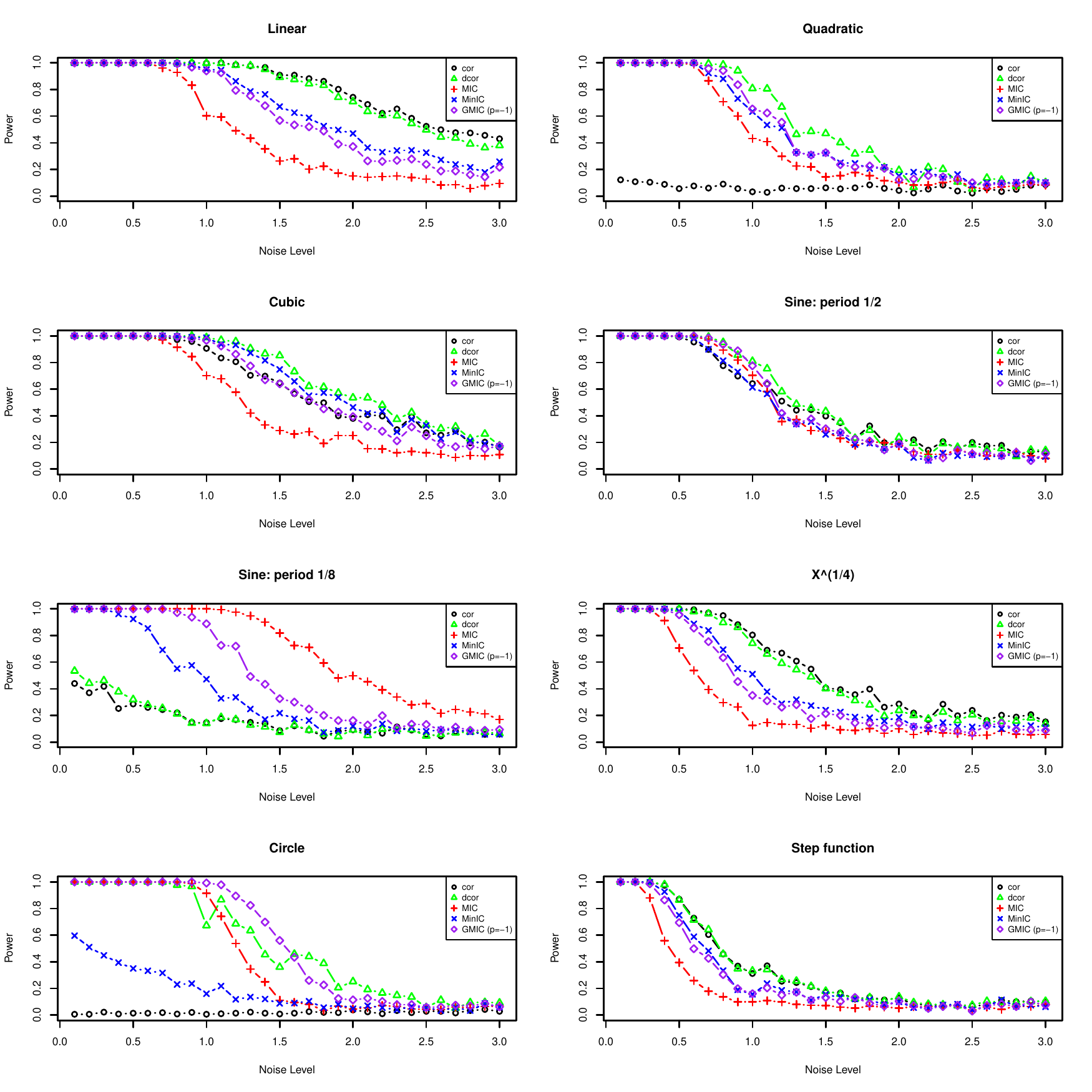}
	\end{center}
	\caption{Simulation results for the eight relationships tested (n=320), for comparison with the Simon \& Tibshirani commentary. Note that the MIC, distance correlation, and Pearson's correlation results match the results in the commentary. Overall the relative power of the methods appears similar to that in Figure \ref{fig:power}.}
	\label{fig:power_pre}
\end{figure}

\begin{figure}
	\begin{center}
		\includegraphics[width=6.5in]{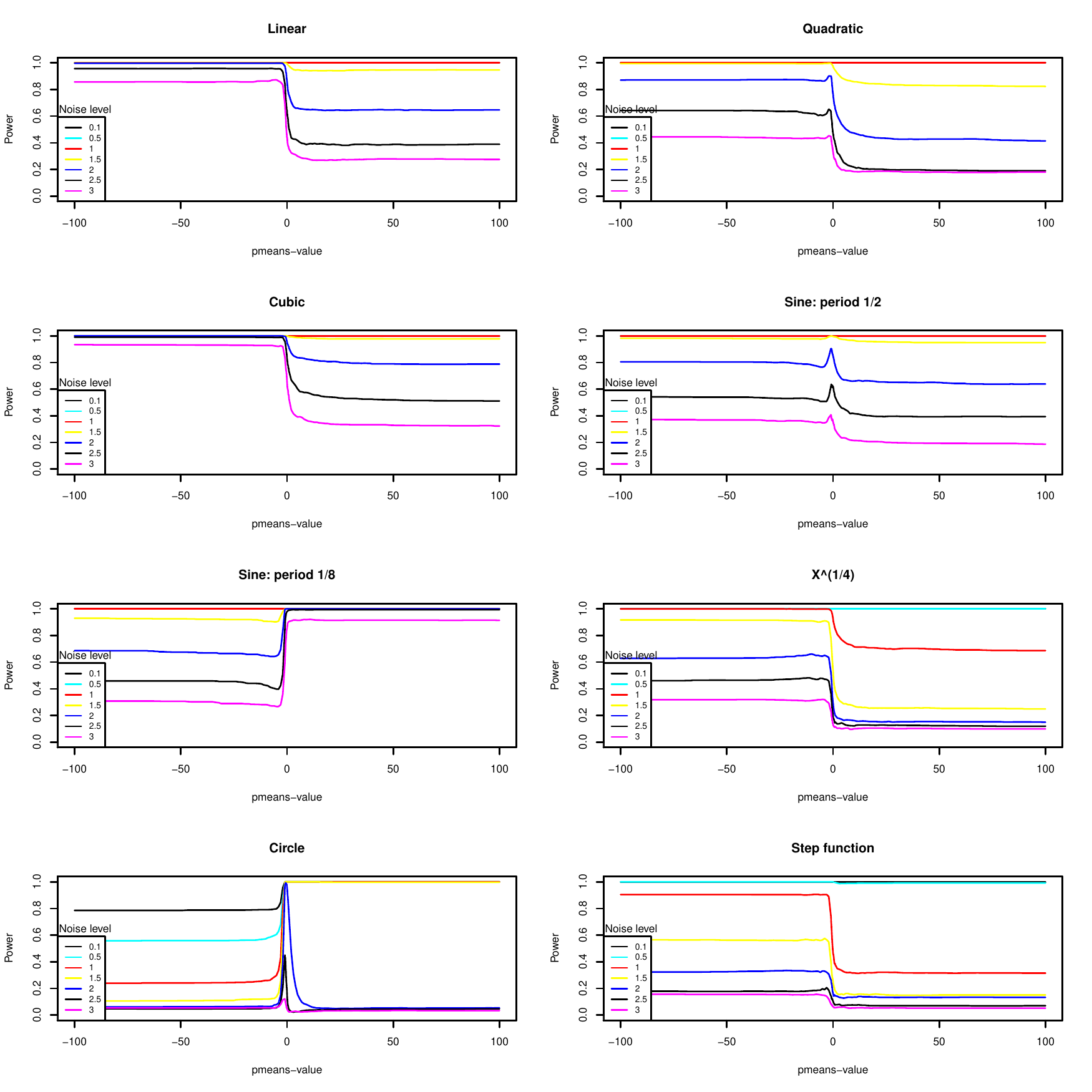}
	\end{center}
	\caption{GMIC power under different tuning parameter values. Note that power seems to be maximal around $p=-1$ for all relationships except the high-frequency sine wave.}
	\label{fig:GMIC_power}
\end{figure}

\begin{figure}
	\begin{center}
		\includegraphics[width=6.5in]{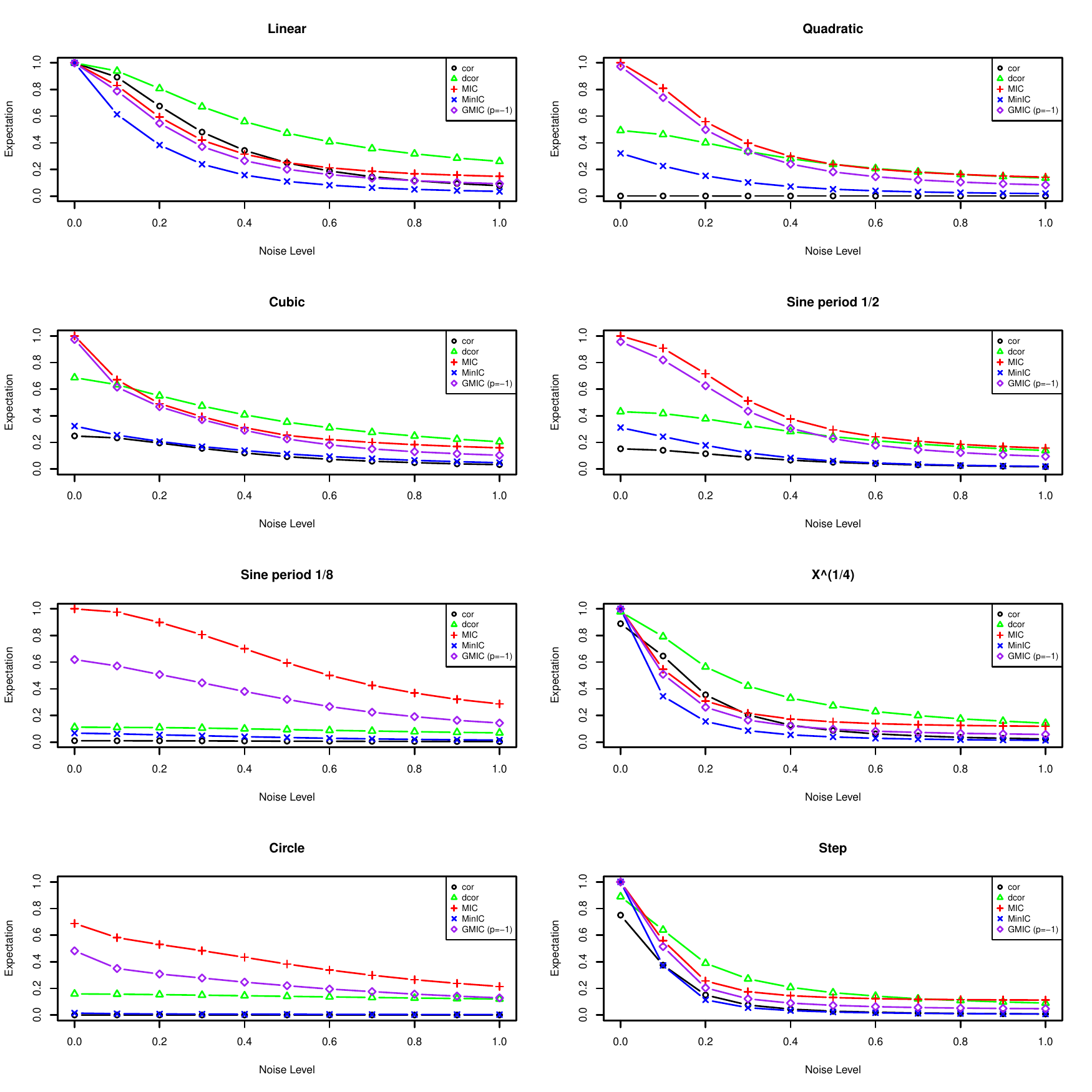}
	\end{center}
	\caption{Sample means of statistic values under different noise levels (n=2000). Note that MIC has the largest value for most settings, a result which is necessarily the case within the class of $\GMIC$ methods. Nonetheless, $\GMIC_{-1}$ is similar to MIC in terms of sample mean for many of the relationships considered.}
	\label{fig:expectation}
\end{figure}

\end{document}